\newtheorem{theorem}{Theorem}[section]
\newtheorem{lemma}[theorem]{Lemma}
\newtheorem{proposition}[theorem]{Proposition}
\newtheorem{assumption}[theorem]{Assumption}
\newcommand{\qed}{\nobreak \ifvmode \relax \else
      \ifdim\lastskip<1.5em \hskip-\lastskip
      \hskip1.5em plus0em minus0.5em \fi \nobreak
      \vrule height0.75em width0.5em depth0.25em\fi}
\DeclareMathOperator*{\argmin}{arg\,min}
\newcolumntype{L}[1]{>{\RaggedRight\hspace{0pt}}p{#1}}
\newcolumntype{R}[1]{>{\RaggedLeft\hspace{0pt}}p{#1}}
\newcommand{\adnote}[1]{\ifthenelse{ \boolean{include-notes}}%
 {\textcolor{blue}{\textbf{#1}}}{}}
 \newcommand{\sknote}[1]{\ifthenelse{ \boolean{include-notes}}%
 {\textcolor{blue}{\textbf{SK: #1}}}{}}
  \newcommand{\mlnote}[1]{\ifthenelse{ \boolean{include-notes}}%
 {\textcolor{purple}{\textbf{ML: #1}}}{}}
 \newcommand{\jmnote}[1]{\ifthenelse{ \boolean{include-notes}}%
 {\textcolor{orange}{\textbf{JM: #1}}}{}}
\title{Constraint Estimation and Derivative-Free Recovery \\ for Robot Learning from Demonstrations}
\author{Jonathan Lee$^{1}$, Michael Laskey$^{1}$, Roy Fox$^{1}$, Ken Goldberg$^{1,2}$
\thanks{$^{1}$Department of Electrical Engineering and Computer Science}%
\thanks{$^{2}$Department of Industrial Engineering and Operations Research}
\thanks{$^{1-2}$The AUTOLAB at UC Berkeley; Berkeley, CA 94720, USA}
\thanks{{\tt\small jonathan\_lee@berkeley.edu, laskeymd@berkeley.edu, royf@berkeley.edu, goldberg@berkeley.edu}}%
}
\begin{document}

\maketitle
\global\csname @topnum\endcsname 0
\global\csname @botnum\endcsname 0

\thispagestyle{empty}
\pagestyle{empty}



\begin{abstract}

Learning from human demonstrations can facilitate automation but is risky because the execution of the learned policy might lead to collisions and other failures. Adding explicit constraints to avoid unsafe states is generally not possible when the state representations are complex. Furthermore, enforcing these constraints during execution of the learned policy can be challenging in environments where dynamics are difficult to model such as push mechanics in grasping. 
In this paper, we propose Derivative-Free Recovery (DFR), a two-phase method for generating robust policies from demonstrations in robotic manipulation tasks where the system comes to rest at each time step.
In the first phase, we use support estimation of supervisor demonstrations and treat the support as implicit constraints on states. We also propose a time-varying modification for sequential tasks. 
In the second phase, we use this support estimate to derive a switching policy that employs the learned policy in the interior of the support and switches to a recovery policy to steer the robot away from the boundary of the support if it drifts too close.
We present additional conditions, which linearly bound the difference in state at each time step by the magnitude of control, allowing us to prove that the robot will not violate the constraints using the recovery policy. A simulated pushing task in MuJoCo suggests that DFR can reduce collisions by 83\%. On a physical line tracking task using a da Vinci Surgical Robot and a moving Stewart platform, DFR reduced collisions by 84\%.
\end{abstract}

\section{Introduction} 

\begin{figure}
\center
\includegraphics[width=0.42\textwidth]{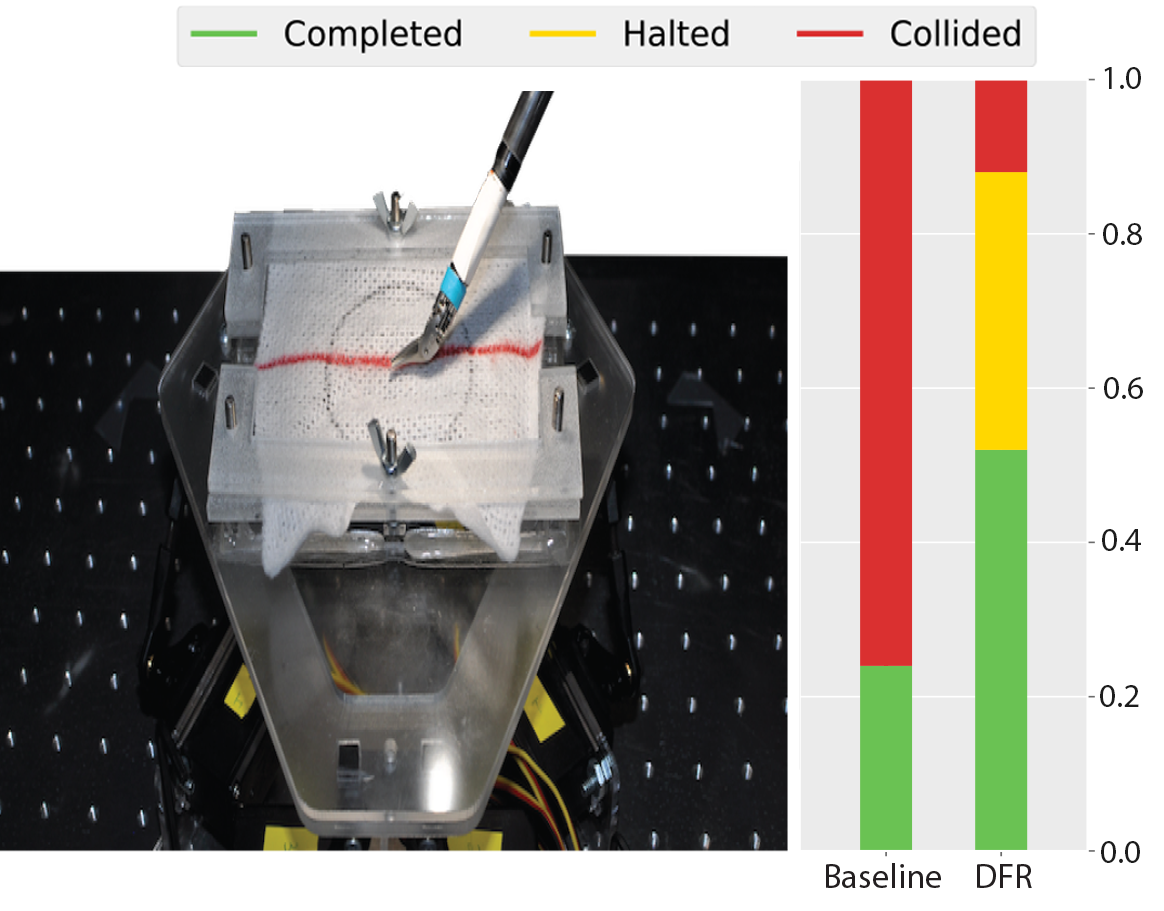}
\caption{
    \footnotesize
 The da Vinci Surgical Robot tracking a line drawn on gauze as the Stewart platform applies physical disturbances. The Baseline policy is compared with the policy with Derivative-Free Recovery (DFR) on the da Vinci line tracking task. Each segment depicts the fraction of ``Completed," ``Halted," and ``Collided" trajectories. The results show that DFR significantly reduces collisions while also increasing the fraction of completed trajectories.} \vspace*{-20pt}
\label{teaser}
\end{figure}

Robotic manipulation tasks are relevant in many industrial applications such as warehouse order fulfillment and flexible manufacturing where a robot must grasp or manipulate an object in environments with little structure. One method of approaching these problems is to construct an analytic model; however, doing so can often be difficult due to complex state spaces such as images, complicated mechanics such as pushing, and uncertainties in parameters such as friction. An alternative method is to use supervisor demonstrations to learn a policy. With learning from demonstrations, a robot observes a supervisor policy and learns a mapping from state to control via regression. This approach has shown promise for automation and robotics tasks such as grasping in clutter \cite{laskeyrobot}, robot-assisted surgery \cite{van2010superhuman}, and quadrotor flight \cite{coates2008learning}.
	
Enforcing constraints on states, such as ensuring that a robot does not tension  tissue above a certain level of force during a surgical task, remains an open problem in learning from demonstrations. Even if the demonstrated trajectories satisfy the constraints, there is no guarantee that the resulting learned policy will. For example, the robot may take a series of slightly sub-optimal actions due to approximation error of the learned policy and find itself in states vastly different from those visited by the supervisor. We desire to ensure the robot does not enter constraint-violating regions during execution. In this paper, we consider this problem for robotic manipulation in domains where the system comes to rest at each time step. This problem setting is inherent in many manipulation tasks in industrial and surgical settings with position control and has become increasingly important in automation \cite{chen2017algorithm, rossano2013easy, lu2015human}.

While techniques exist to enforce constraints on learned policies, they are often limited to operate in domains with known models \cite{gillula2012guaranteed, perez2017c}. This can be challenging when dealing with robotic manipulation where interactions between objects can be fundamentally hard to model \cite{thananjeyan2017multilateral}. It can also be challenging to explicitly specify constraints. In a surgical task, objects such as tissue are often soft and deformable and observations often come from images from an endoscope. Additionally, specifying constraints such as the level of tension allowed on certain piece of tissue may require hard-coding rules that rely on complex models of these objects and noisy observations. However, the supervisor's demonstrated data provide not only information about the desired policy, but also information about the constraints. Intuitively, the robot should only visit states that the supervisor knows are safe to visit. 

We propose leveraging the demonstration data to estimate the support of the supervisor's state distribution and treating the estimated support as a set of implicit constraints. The support is defined as the subset of the state space that the supervisor has non-zero probability of visiting. This subset is informative because it describes regions that must be safe since the supervisor visits those states. The complement of the support describes the region that may not be safe or include constraint-violating states. In the aforementioned surgical task, this would correspond to the robot recognizing that observations of heavily tensioned tissue are uncommon or nonexistant in the supervisor demonstrations and so it should try to avoid these states.

Various methods exist for density estimation which may be used to identify regions of support. In prior work, it was shown that the One Class SVM can be used effectively to estimate boundaries around the supervisor's demonstrations \cite{laskey2016shiv}.

We use this support estimate to derive a switching policy that employs the robot's learned policy in safe states and switches to a recovery policy if the robot drifts close to the boundary of the estimated support. The recovery policy is posed as a derivative-free optimization (DFO) of the decision function of the support estimator, which provides a signal towards estimated safe areas. Because traditional DFO methods can be difficult to apply in dynamical systems, we propose a method to find likely directions toward safety by examining the outcome of applying small perturbations in the control signal, which we assumed lead to small changes in state. The recovery policy is designed to steer the robot towards safer regions in the best case or come to a stop if it cannot. We also present a condition, which bounds the change in state with respect to the magnitude of control, under which the robot will never enter the constraint-violating regions using the recovery policy.

In simulated experiments on the MuJoCo \textit{Pusher} task \cite{hausman2017multi, singh2017gplac}, we compared the proposed recovery control to a naive baseline and found that recovery reduced performance of the learned policy by 35\% but also reduced the rate of collisions by 83\%.

We also deployed the recovery strategy on a da Vinci Surgical Robot in a line tracking task under disturbances from a Stewart platform shown in Fig. \ref{teaser}(b) and found that the successes increased from 24\% to 52\%  and collisions decreased from 76\% to 12\%.

This paper makes four contributions:
\begin{enumerate}
    \item An implicit constraint inference method using support estimation on demonstrated data.
    \item Derivative-Free Recovery, a novel model-free method for recovery control during execution of a learned policy.
    \item Conditions under which the robot will not violate the constraints while using the recovery method.
    \item Experimental results evaluating the proposed methods in simulation and on a physical robot.

\end{enumerate}

\section{Related Work}

\textbf{Learning from Demonstrations in Automation Tasks:} Learning from demonstrations, sometimes also referred to as imitation learning, describes a broad collection of methods for learning to replicate sequential decision making. Specifically in automation and robotics, learning from demonstrations often makes use of kinesthetic or teleoperated demonstrations of control given by a human supervisor that is able to reason about the task from a high level. The learning system takes as input these demonstrations and outputs a policy mapping states to actions.

Prior work in automation has explored learning from demonstrations for highly unstructured tasks such as grasping in clutter, scooping, and pipetting \cite{laskeyrobot, liang2017dvrk}. Past work has also addressed the specific problem of learning from demonstrations under constraints \cite{billard2008robot, calinon2009robot}. A popular method for dealing with unknown constraints is to identify essential components of multiple successful trajectories based on variances in the corresponding states and then to produce a learned policy that also exhibits those components \cite{calinon2008probabilistic}. Despite early empirical success, constraint satisfaction is not guaranteed \cite{perez2017c} and the machine learning model used to learn the policy must often be compatible with the variance estimator. We consider a method that is agnostic to the machine learning model.

C-LEARN \cite{perez2017c} successfully incorporated motion planning with geometric constraints into keyframe-based learning from demonstrations for manipulation tasks, guaranteeing constraint satisfaction. However, constraints must be inferred from predetermined criteria, and an accurate model is required in order to satisfy those constraints using a motion planner.

Recent work has also dealt with learning constraint satisfaction policies from demonstrations when the constraints are unknown but linear with respect to the controls \cite{armesto2017learning, 
howard2009novel}. There has also been recent work in guiding model-free policies towards states about which they are more confident, effectively trying to avoid certain unknown regions of the state space via temporal difference learning \cite{schroecker2017state}.

Significant literature exists on the topic of error detection and recovery (EDR) \cite{donald1989error} with models. For example, Donald et al. \cite{donald2013planning} used EDR methods for planning with microrobots.  In this paper we address this problem in the model-free domain.

\textbf{Safe Learning to Control:} Interest in learning-based approaches for control for under constraints has increased as a result of recent advances in learning and policy search, which have traditionally been studied without constraints due to their exploratory and unpredictable nature \cite{achiam2017constrained}.

Assuming dynamics are known or can be estimated, Gillula and Tomlin \cite{gillula2012guaranteed} applied reachability analysis to address bounded disturbances by computing a sub-region within a predefined safe region where the robot will remain safe under any disturbance for a finite horizon. This region is referred to as the ``discriminating kernel" by Akametalu et al. \cite{akametalu2014reachability} and Fisac et al. \cite{fisac2017safety} who extended this theory to obtain safe policies that are less conservative under uncertainty. In their work, the safety controller is applied only on the boundary of the discriminating kernel while the robot's controller is freely applied in the interior, resulting in a switching policy. Although our objectives are similar, there are several key differences in our assumptions. First, we do not require the model or constraints to be specified explicitly to the robot. Also, safe reinforcement learning aims to facilitate exploration for policy improvement while our approach addresses safe execution of policies after learning.

In surgical robotics, Yip and Camarillo \cite{yip2016model} studied model-free control of continuum manipulators in constrained environments where the constraints are initially unknown. The authors proposed a combined position and force controller which actively estimates Jacobians. Continuum manipulators in surgical environments are in general designed to ``conform" to obstacles constraints. In this paper, we consider manipulators in general constrained environments where the manipulator may not have direct force feedback from interacting with constraints.

\section{Problem Statement}

\textbf{Assumptions: } We consider a discrete-time manipulation task with an unknown Markovian transition distribution and constraints specifying stay-out regions of the state space, such as collisions. The constraints are initially unknown to the robot. We further assume that the system comes to rest at each time step as in manipulation tasks with position control such as \cite{liang2017dvrk}. As in many applications of learning from demonstrations, we do not assume access to a reward function, meaning that there is no signal from the environment to indicate whether the robot is successfully completing the task. We assume a given set of observations of demonstrations from a supervisor that do not violate the constraints. The remainder of this section formalizes and elaborates these assumptions.

\textbf{Modelling: }  Let the continuous state space and continuous control space be denoted by $\mathcal X \subseteq \mathbb R^n$ and $\mathcal U \subseteq \mathbb R^d$, respectively. The unknown transition distribution is given by $p(x_{t + 1} | x_t, u_t)$ with unknown initial state distribution $p_0(x)$. We define $\tau = \left\{ (x_0, u_0), \ldots, (x_{T - 1}, u_{T - 1}), (x_T)\right\}$ as a trajectory of state-action pairs over $T$ time steps. The probability of a trajectory under a stochastic policy $\pi : \mathcal X \mapsto \mathcal U$ is given by
\begin{align*}
p(\tau | \pi) = p_0(x) \prod^{T - 1}_{t = 0} p(u_t | x_t ; \pi) p(x_{t + 1} | x_t, u_t).
\end{align*}
Additionally, we denote $p_t(x ; \pi)$ as the distribution of states at time $t$ under $\pi$, and we let $p(x ; \pi) = \frac{1}{T}\sum_{t = 0}^T p_t(x; \pi)$. 

Although unknown, the dynamics of the system are assumed to leave the system at rest in each time step. For many practical discrete-time manipulation tasks, this property is common for example  in settings where controls are positional and objects are naturally at rest such as in grasping in clutter \cite{laskeyrobot}. 

\textbf{Objective: } This paper considers the problem of learning to accomplish a manipulation task reliably from observed supervisor demonstrations while attempting to satisfy constraints. We will only consider learning from demonstrations via direct policy learning, i.e. supervised learning.

Instead of a reward function, we assume that we have a supervisor that is able to demonstrate examples of the desired behavior in the form of trajectories. The robot's goal is then to replicate the behavior of the supervisor.

The goal in direct policy learning is to learn a policy $\pi: \mathcal X \mapsto \mathcal U$ that minimizes the following objective
\begin{equation}\label{obj}
\mathbb E_{\tau \sim p(\tau | \pi) } \: J(\tau, \pi^*)
\end{equation}
where $J(\tau, \pi^*)$ is the cumulative loss of trajectory $\tau$ with respect to the supervisor policy $\pi^*$:
\begin{align}
    J(\tau, \pi^*) := \sum_{t = 0}^{T - 1} \ell (u_t, \pi^*({x_t})).
\end{align}
$\pi^*(x_t)$ indicates the supervisor's desired control at the state at time $t$, and $\ell: \mathcal U \times \mathcal U \mapsto \left[ 0, \infty \right)$ is a user-defined, non-negative loss function, such as the Euclidean norm of the difference between the controls. Note that in (\ref{obj}), the expectation is taken over trajectories sampled from $\pi$. Ideally, the learned policy minimizes the expected loss between its own controls and those of the supervisor on trajectories sampled from itself. 

This objective is difficult to optimize directly because the trajectory distribution and loss terms are coupled. Instead, as in \cite{laskey2016shiv, ross2010efficient}, we formulate it as a supervised learning problem:
\begin{equation}\label{surr}
\min_{\pi \in \Pi} \quad \mathbb E_{\tau \sim p(\tau | \pi^*)} J(\tau, \pi).
\end{equation}
Here, the expectation is taken with respect to the trajectories under the supervisor policy, rather than the robot's policy. This formulation decouples the distribution and the loss, allowing us to collect a dataset of training demonstrations $\left\{ \tau_1, \ldots, \tau_N \right\}$ from the supervisor and minimize the empirical loss to obtain a learned policy $\hat \pi$:
\begin{equation}\label{emp}
\hat \pi = \underset{\pi \in \Pi}{\argmin} \quad \frac{1}{N} \sum_{i  =1} ^N J(\tau_i, \pi).
\end{equation} 

This relaxation of the problem comes with a consequence. Because the training dataset is sampled from a different distribution (the supervisor distribution), it is difficult to apply traditional supervised learning guarantees about the learned policy. This problem is referred to as \textit{covariate shift}. Prior work has considered learning recovery behavior during training \cite{ross2010efficient, laskey2017dart}, but it is still not clear how errors may affect the robot or its environment, which motivates the need for increased robustness during execution.

\textbf{Constraints: } While prior work in learning from demonstrations has often dealt in the unconstrained setting, we consider learning in the presence of constraints that specify regions of the state space that the robot should actively avoid. Using the notation of \cite{akametalu2014reachability}, let $\mathcal K$ be a subset of $\mathcal X$ that is constraint-satisfying and let $\mathcal K^C$, the constraint-violating region, be its relative complement in $\mathcal X$. Note that this region is different from the support of the supervisor. The support is a subset of $\mathcal K$ that does not intersect $\mathcal K^C$. The supervisor, who is able to reason about the task at a high level, demonstrates the task robustly by providing constraint-satisfying trajectories during training time only. That is, $p(x; \pi^*) = 0$ for all $x \in \mathcal K^C$. Our objective is to have the robot learn this policy from demonstrations and perform it autonomously and reliably without entering the constraint-violating regions when it is deployed.

\section{Algorithms}

\subsection{Support Estimation}
Given a set of sample states from supervisor demonstrations, $\left\{ x_i \right\}_{i = 1}^n \subset \mathcal X$, support estimation returns an approximate region of non-zero probability, $\left\{x \in \mathcal X \ : \ p(x; \pi^*) > 0 \right\}$. Since the supervisor is always safely demonstrating the task, if $p(x; \pi^*) >0$, then we know that $x \in \mathcal K$.

As presented by Sch\"olkopf et al. in \cite{scholkopf2001estimating}, a common objective in support estimation is to identify the set in the state space of least volume that captures a certain probability threshold $\alpha$. For Lebesgue measure $\mu$ and probability space $(\mathcal X, \mathcal B, P)$ where $\mathcal B$ is the set of measurable subsets of $\mathcal X$ and $P_{\pi^*}(B)$ is the probability of $B \in \mathcal B$ under the supervisor policy, the \textit{quantile function} is
\begin{align*}
U(\alpha) = \inf_{B \in \mathcal B} \left\{ \mu(B) \: : \: P_{\pi^*}(B) \geq \alpha \right\}.
\end{align*}
The minimum volume estimator, $B(\alpha)$, is defined as the subset that achieves this objective for a given $\alpha$ \cite{scholkopf2001estimating}.
To obtain the true support, we set $\alpha = 1$ since we would like to obtain the minimum volume estimator of the entire non-zero density region. In practice, there is no way to obtain the true minimum volume estimator with finite data and an unknown distribution. Instead, many methods for obtaining approximate support estimates have been proposed \cite{gayraud1997estimation, scholkopf2001estimating}. For example, one might employ a kernel density estimator. In these cases, we often let $\alpha < 1$ to allow some tolerance for outliers, so that the estimator is more robust.

Despite prior use of support estimation in robotic and sequential tasks \cite{laskey2016shiv}, estimators for which $\alpha < 1$ can be problematic when applied directly to observed states due to the time-variant nature of the state distribution. We provide a simple example where the minimum volume estimator fails to provide an accurate support estimate.

Consider two disjoint subsets of the state space $B_0$ and $B_1$, such that $p_0(x \in B_0; \pi^*) = 1$ and $p_t(x \in B_1; \pi^*) = 1$ for all $t > 0$. It is clear that $\lim_{T \rightarrow \infty} p(x \in B_0; \pi^*) = \lim_{t \rightarrow \infty} \frac{1}{T} \sum_{t = 0}^T p_t(x \in B_0; \pi^*) = 0$ since states in $B_0$ are only possible as initial states. Therefore, if we simply draw examples from the distribution $p(x;\pi^*)$, the appropriate minimum volume estimate of any $\alpha$-quantile  will not include $B_0$ because the entire long-term probability density lies entirely in $B_1$.

This example reveals an important problem in the support estimation for tasks involving Markov chains: regions of the state space may be left out of the support estimate not because they are not relevant,  but rather they are only relevant in a vanishing fraction of time steps. Thus, even if a region is known to surely be in the supervisor trajectories at some time step, it may be excluded from the estimated support. The example is not unrealistic. This problem may occur, albeit less severely, in any Markov chain where regions of the state space are revisited at different time steps.

Taking inspiration from \cite{ross2010efficient}, instead of using a single support estimator to encompass the entire distribution over states $p(x; \pi^*)$, we propose to use $T$ estimators each for a corresponding distribution $p_t(x; \pi^*)$. By doing so, we limit each estimator to a single time step potentially reducing sample variance. When demonstrations are time-aligned, this can lead to improved support estimation. When they are not, we at worst increase the sample complexity $T$-fold.

In this paper, we use the One Class Support Vector Machine (OCSVM) to estimate the support \cite{scholkopf2001estimating, scholkopf2002learning}. The estimator determines a small region of $\mathcal X$ where the fraction of examples within the region converges to an appropriate $\alpha$-quantile as more data is collected \cite{vert2006consistency}. 
Sch\"olkopf et al. \cite{scholkopf2001estimating} present the primal optimization problem of the OCSVM as
\begin{align*}
\min_{w, \rho, \epsilon} \quad & \frac{1}{2} \|w\|_2^2 + \frac{1}{\nu m} \sum_{i = 1}^m \epsilon_i - \rho \\
\text{s.t.} \quad & w^\top \phi(x_i) \geq \rho - \epsilon_i \quad i = 1, \ldots, m
\end{align*}
where $m$ is the number of training examples, $0 < \nu < 1$ is a hyperparameter used to adjust the quantile level, and $\phi(\cdot)$ is a mapping from the state space to some implicit feature space. 

At run time, we can determine whether each visited state lies in the estimated support by evaluating $\text{sgn}\left\{g(x)\right\}$, where $g(x) = w^\top \phi(x) - \rho$ is the decision function. Positive values indicate that $x$ is in the estimated support and negative values indicate otherwise. For the remainder of this paper, we will use the Gaussian kernel: $\phi(x)^\top \phi(x') = e^{-\gamma \|x - x'\|_2^2}$.

\subsection{Derivative-Free Recovery Control}

\begin{figure*}
\center
\includegraphics[width=.9\textwidth]{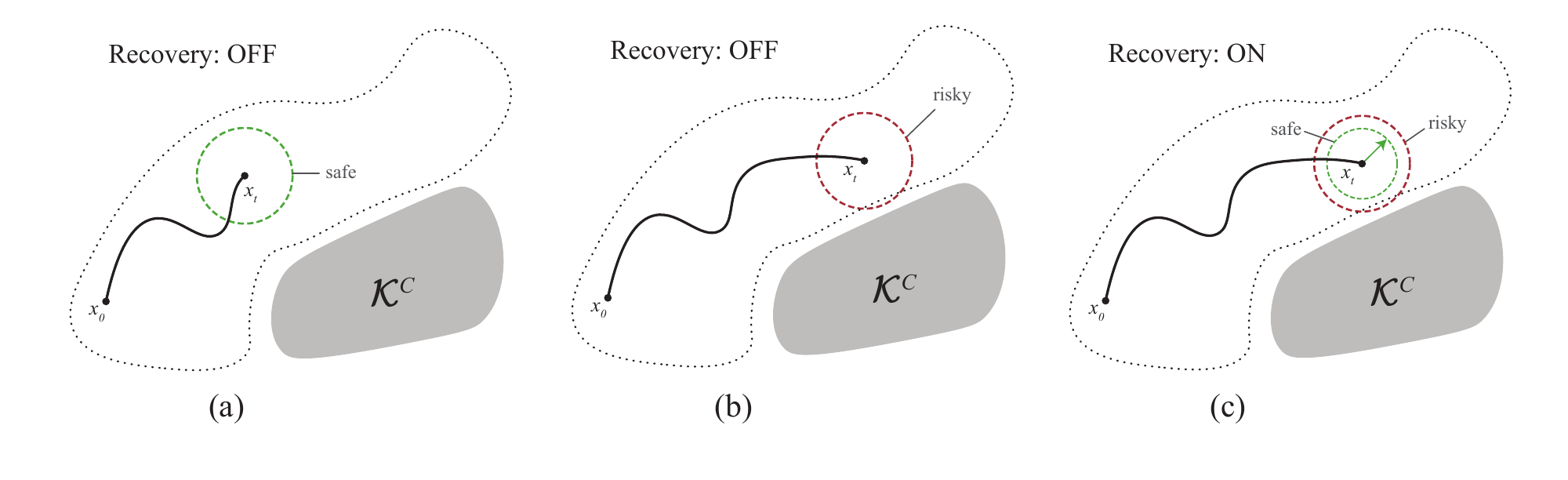}
\vspace*{-20pt}
\caption{
    \footnotesize
The estimated support is represented as the dotted shape and the region of constraint-violating states is denoted by $\mathcal K^C$. At run time, the robot executes its learned policy starting at state $x_0$. The dashed circle around the current state $x_t$ indicates the ball of states that the robot may enter in the next time step given its intended action. In (a), the ball is fully contained in the estimated support, so the robot uses its learned policy only. In (b), the ball overlaps with the boundary of the estimated support, indicating that the next state may be unsafe. In (c), as a result the recovery policy is activated, restricting the magnitude of control, as random perturbations are applied to find a direction of ascent.}
\label{alg}
\vspace*{-15pt}

\end{figure*}

Once the support has been identified based on the supervisor demonstrations, the robot must  learn a policy that minimizes the loss while staying within the boundaries of the estimated support to ensure it does not violate the constraints. To reconcile these potentially competing objectives, we propose using a switching policy at run time as in \cite{akametalu2014reachability} that alternates between the learned policy $\hat \pi$ from (\ref{emp}) and a recovery policy $\pi_{R}$ that attempts to guide the robot to interior regions of the support if it is close to the boundary.

The decision functions of the support estimators provide natural signed distance functions to the boundary of the estimated support. Thus as the robot rolls out, we can obtain reasonable online estimates of how ``close" it is to the boundary. If the robot is in a state with a relatively high decision function value, it should apply its learned controls freely. However, if the decision function value at the robot's state is close to zero (i.e. near the boundary), the recovery should be activated to help the robot recover.

Formally, we may define a ``close" distance as any distance from the boundary where the robot's learned policy could send it past the boundary in the next time step. Without a  model of the dynamics, this cannot be known exactly. We introduce a tuneable hyperparameter $\lambda$, similar to a learning rate, which intuitively corresponds to a proportional relationship between the amount of change in the decision function and the magnitude of the applied control. We then propose a switching policy $\tilde \pi$ to incorporate the recovery behavior $\pi_{R}$:
\begin{align*}
\tilde \pi = \begin{cases}
	\hat \pi & g_t(x_t) > \lambda \|\hat \pi(x_t)\|_2 \\
	\pi_{R} & \text{otherwise}.
\end{cases}
\end{align*}

The simplest recovery behavior is to apply zero control for the remaining time steps after the threshold has been crossed, potentially leaving the task incomplete. While this strategy will in principle reduce the risk of entering a constraint-violating state, it is overly conservative.

To increase the chance of completing the task while maintaining constraint satisfaction, we propose a best-effort recovery policy that leverages the decision function of the support estimator. When enabled, the recovery policy should drive the robot towards regions of the state space where the estimated decision value is higher, indicating the interior regions of the support. That is, we want to ascend on $g_{t}(x)$. If the dynamics model were known analytically, we could apply standard optimization techniques such as gradient ascent to obtain a local maximum of the decision function with respect to the controls. However, the model-free domain considered in this paper presents a challenge, as the decision function with respect to the control is unknown. It is therefore not possible to use analytic derivative approaches to optimize the objective.

Additionally, conventional Derivative-Free Optimization (DFO) and finite difference methods \cite{rios2013derivative}, where multiple function evaluations of $g_t(x)$ would be made to find directions of ascent, are not suitable because we cannot directly manipulate the state $x$. Instead we may only control the state by applying input controls through the system, and we may only evaluate the effect of a control once it has been applied. Furthermore, because the system advances each time we apply a control, the objective function, which is a function of the current state, must change as well.

To address this problem, we propose a novel greedy derivative-free optimization approach, called Derivative-Free Recovery (DFR) Control, that employs a method similar to hill-climbing to make a best-effort recovery by applying conservative controls to ascend on the decision function. Consider the robot at state $x_t$. A small control perturbation $u_\delta$ is applied and yields a small change in state from $x_t$ to $x_\delta$. Consequently the perturbation also results in a small change in the decision function which indicates whether $u_\delta$ causes ascent or descent of the decision function at state $x_t$.

\newcommand{\uDFO}{u_\delta}
\newcommand{\urec}{u_{R}}

\begin{figure}
 \begin{algorithm}[H]
 \caption{Derivative-Free Recovery (DFR)}
 \begin{algorithmic}[1]\label{alg1}
  \STATE Initialize $t \gets 0$, $x_0 \sim p_0(x)$
  \WHILE {$t < T$}
  \STATE $\hat u_t \gets \hat \pi(x_t)$
    \WHILE{$g_t(x_t) \leq \lambda \|\hat u_t\|_2$}
    \STATE Sample random $\uDFO$ s.t. $\| \uDFO \|_2 \ll \frac{g_t(x_t)}{\lambda}$
    \STATE Apply $\uDFO$ and observe $x_\delta \sim p(\cdot | x_t, \uDFO)$
    \IF{$g_{t}(x_\delta) \leq g_t(x_t)$}
    \STATE $\uDFO \gets -\uDFO$
    \ENDIF
    \STATE $\urec \gets \eta \frac{\uDFO}{\|\uDFO\|_2} $
    \STATE Apply $\urec$ and observe $x \sim p\left(\cdot |  x_{\delta}, \urec\right)$
    \STATE $x_t \gets x$
    \STATE $\hat u_t \gets \hat \pi(x_t)$
    \ENDWHILE
    \STATE Apply $\hat u_t$ and observe $x_{t + 1} \sim p(\cdot | x_t, \hat u_t)$
    \STATE $t \gets t+1$
  \ENDWHILE
 \end{algorithmic}
 \end{algorithm}
 \vspace*{-30pt}
\end{figure}

The full procedure for applying recovery controls online is shown in Algorithm \ref{alg1}. At any given time step, a control is obtained from the robot's policy. Using $\lambda$ and the magnitude of the control, it is decided whether the robot's control is safe to use. If it is safe, then the control is executed without interruption. In the event that it is not safe, the recovery strategy is activated. A random but small control $u_\delta$ is then sampled, such that applying that control would still result in a positive decision function value. On lines 7 and 8, an approximate ascent direction is identified by executing the small random control and evaluating the decision function again. The recovery control $u_{R}$ is then chosen as a vector in the direction of ascent with conservative magnitude $\eta$, where $0 < \eta < \frac{g_t(x)}{\lambda}$, limiting the risk of steering the robot out of the support and potentially into constraint-violating regions. Thus a larger choice of $\lambda$ corresponds to a more conservative policy. While guaranteeing improvement of decision function may not be possible in all problems, improvements may be found in environments with locally nice and differentiable dynamics. A visual procedure is given in Fig. \ref{alg}.

Furthermore, a fail-safe strategy naturally follows from this algorithm. In the event that recovery is not possible and the robot gets arbitrarily close to the boundary of the support, the magnitudes of the sample and recovery controls approach zero, effectively halting the robot to prevent it from failing. In the next section, we present conditions when we can guarantee constraint satisfaction for Algorithm~\ref{alg1} and formalize a worst-case choice for $\lambda$.

\subsection{Conditions for Constraint Satisfaction}
While it is not strictly necessary for good performance on many manipulation tasks as seen in the experiments, we introduce a condition on the dynamics model specific to some systems that formally characterizes a notion that the system comes to rest between time steps and allows us to guarantee that the robot will not violate constraints in systems where it is satisfied.

\begin{assumption}
For all $t \in \left\{ 0, \ldots, T - 1 \right\}$ there exists some constant $K$ such that the following holds:
\begin{equation}\label{dyn}
    \|x_{t+1} - x_t\|_2 \leq K \|u_t\|_2.
\end{equation}
\end{assumption}
This condition holds in stable manipulation systems where the amount of change from one state to the next is limited.

We now show that under the proposed algorithm and the above condition, it is guaranteed that the robot will not violate the constraints. Formally, let $\tilde B_{t} \equiv \left\{x\::\:g_t(x) \geq 0\right\}$ be the estimated support of $p_t(x | \pi^*)$ with a corresponding $L$-Lipschitz decision function $g_t(x)$. By (\ref{dyn}) and the Lipschitz continuity of $g_t(x)$, $|g_t(x_{t + 1}) - g_t(x_{t})| \leq L \| x_{t + 1} - x_t\|_2 \leq LK \|u_t\|_2$. This inequality formalizes a worst-case change in decision function value with respect to the magnitude of the robot's control, giving concrete meaning to the choice of $\lambda = LK$. Next, we guarantee constraint satisfaction for states in the estimated support:

\begin{lemma}\label{insupport}
If at time $t$, the robot is in state $x_t$ and $g_t(x_t) \geq 0$ and $\tilde B_{t} \cap \mathcal K^C = \emptyset$, then $x_t \in \mathcal K$.

\end{lemma}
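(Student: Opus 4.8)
The plan is simply to unwind the definitions and chain three set-membership facts; no estimate or limiting argument is needed. First I would note that the hypothesis $g_t(x_t) \geq 0$ is exactly the defining inequality of the estimated support, so by the definition $\tilde B_t \equiv \{x : g_t(x) \geq 0\}$ we immediately obtain $x_t \in \tilde B_t$. Second, the hypothesis $\tilde B_t \cap \mathcal K^C = \emptyset$ then rules out $x_t \in \mathcal K^C$. Third, recalling from the Problem Statement that $\mathcal K^C$ is defined as the relative complement of $\mathcal K$ in $\mathcal X$, so that $\mathcal X$ is partitioned into $\mathcal K$ and $\mathcal K^C$, and recalling that the robot's state $x_t$ always lies in $\mathcal X$, the fact that $x_t \notin \mathcal K^C$ leaves only $x_t \in \mathcal K$, which is the assertion.

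There is no real technical obstacle here: the argument is a two-line chain of inclusions. The only subtlety worth spelling out is the implicit use of $x_t \in \mathcal X$, which is what lets us conclude membership in $\mathcal K$ from non-membership in $\mathcal K^C$; absent that, ``not unsafe'' would not coincide with ``safe.'' Conceptually, this lemma should be read as the base case of the overall safety argument: it reduces the problem of guaranteeing constraint satisfaction to the problem of guaranteeing that the robot stays inside the current support estimate, i.e.\ that $g_t(x_t) \geq 0$ is maintained along the trajectory. The genuinely substantive work --- showing that the switching policy of Algorithm~\ref{alg1} preserves $g_t(x_t) \geq 0$ for every $t$ under the dynamics condition~(\ref{dyn}) with the worst-case choice $\lambda = LK$ --- is deferred to the subsequent results, and this lemma is precisely the bridge from ``in the support estimate'' to ``constraint-satisfying'' that those results will invoke.
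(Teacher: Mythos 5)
Your proof is correct and follows essentially the same route as the paper's: membership in $\tilde B_t$ from $g_t(x_t)\geq 0$, then disjointness from $\mathcal K^C$ forces $x_t\in\mathcal K$. The only difference is that you make explicit the use of $x_t\in\mathcal X$ and the complement structure, which the paper leaves implicit.
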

\begin{proof}
This follows immediately from the condition that $\tilde B_{t} \equiv \left\{x\::\:g_t(x) \geq 0\right\}$, which implies that $x_t \in \tilde B_{t}$. Thus, $x_t$ must be in $\mathcal K$.
\end{proof}

Using this lemma, we are able to establish the following proposition:

\begin{proposition}\label{bounddec}
Under Algorithm~\ref{alg1} and the preceding conditions, the robot is never in violation of the constraints if $\tilde B_{t} \cap \mathcal K^C$ is empty.
\end{proposition}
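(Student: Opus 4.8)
The plan is to argue by induction on the time step $t$, showing that the robot's state is always in $\mathcal{K}$, and in fact that whenever the switching policy $\tilde\pi$ is evaluated at a state $x_t$ we have $g_t(x_t) \geq 0$. By Lemma~\ref{insupport}, this latter invariant together with the hypothesis $\tilde B_t \cap \mathcal{K}^C = \emptyset$ immediately yields $x_t \in \mathcal{K}$, so it suffices to maintain the invariant $g_t(x_t) \geq 0$ across all the ways the state can be updated in Algorithm~\ref{alg1}. The base case is the initial state $x_0 \sim p_0(x)$: since the supervisor demonstrates safely, $x_0$ lies in the true support of $p_0(\cdot\,;\pi^*)$, hence (modulo the estimator being an inner approximation, or by simply assuming the estimate is calibrated so that demonstrated states have nonnegative decision value) $g_0(x_0) \geq 0$.

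For the inductive step I would examine each place where the state changes. First, the ordinary branch: if $g_t(x_t) > \lambda\|\hat u_t\|_2$ with $\lambda = LK$, then applying $\hat u_t$ moves to $x_{t+1}$ with $|g_t(x_{t+1}) - g_t(x_t)| \leq LK\|\hat u_t\|_2 < g_t(x_t)$ by the Lipschitz-plus-Assumption~1 bound derived just before Lemma~\ref{insupport}, so $g_t(x_{t+1}) > 0$. The remaining subtlety is that the invariant we need at the next iteration is $g_{t+1}(x_{t+1}) \geq 0$, not $g_t(x_{t+1}) \geq 0$; this is where one must invoke that consecutive time-step support estimators are consistent with the demonstrations, i.e. a state reachable from the supervisor's support at time $t$ lies in the support at time $t+1$ — so I would state this as the operative reading of ``$\tilde B_t \cap \mathcal{K}^C$ empty for all $t$'' together with the supervisor safety assumption $p(x;\pi^*)=0$ on $\mathcal{K}^C$. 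Second, inside the recovery while-loop: the perturbation $u_\delta$ is sampled with $\|u_\delta\|_2 \ll g_t(x_t)/\lambda$, so $|g_t(x_\delta) - g_t(x_t)| \leq \lambda\|u_\delta\|_2 \ll g_t(x_t)$ keeps $g_t(x_\delta) > 0$; then $u_R = \eta\, u_\delta/\|u_\delta\|_2$ with $0 < \eta < g_t(x_\delta)/\lambda$ gives $|g_t(x) - g_t(x_\delta)| \leq \lambda\eta < g_t(x_\delta)$, so the post-recovery state $x$ again has $g_t(x) > 0$. The loop reassigns $x_t \gets x$ and recomputes the threshold, so the invariant is preserved through every loop iteration, and upon exit we re-enter the ordinary branch case already handled.

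The main obstacle is the time-index mismatch: Assumption~1 and the Lipschitz bound only control $g_t$ evaluated at the old-time-step estimator, whereas the algorithm and the next iteration use $g_{t+1}$. Resolving this cleanly requires either (i) an explicit monotonicity/nesting assumption relating $\tilde B_t$ and $\tilde B_{t+1}$ along supervisor-reachable transitions, or (ii) folding it into the statement's hypothesis that $\tilde B_t \cap \mathcal{K}^C = \emptyset$ for \emph{all} $t$ so that one only needs $x_{t+1} \in \mathcal{K}$, which follows from $x_{t+1}$ being reachable in one step from a point of $\tilde B_t \subseteq \mathcal{K}$ under a dynamics that preserves the supervisor's constraint-satisfying support. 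I would present the argument primarily through the single chain-of-inequalities invariant $g_\bullet(\text{current state}) \geq 0$, flag the index issue explicitly, and close by noting that the fail-safe behavior (controls shrinking to zero as $g_t(x_t) \to 0$) means the loop either restores a strictly positive margin or halts the robot in place, in neither case leaving $\mathcal{K}$.
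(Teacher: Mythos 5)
Your proof takes essentially the same route as the paper's: induction on the invariant $g_t(x_t)\ge 0$, using the bound $|g_t(x')-g_t(x)|\le LK\|u\|_2$ from Assumption 1 plus Lipschitz continuity to show that neither the learned control (taken only when $g_t(x_t)>\lambda\|\hat u_t\|_2$) nor the recovery controls $u_\delta$ and $u_R$ can drive the decision value below zero, and then invoking Lemma~\ref{insupport}. The time-index mismatch you flag is real --- the paper's own proof silently writes $g_{t+1}(x_{t+1})\ge g_t(x_t)-LK\|u_t\|_2$, which the stated assumptions only justify for $g_t(x_{t+1})$ absent a consistency condition between consecutive estimators --- so your explicit treatment of that point is, if anything, more careful than the published argument.
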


\begin{proof}
The proof is by induction. Assume that the robot starts inside the estimated support. The induction assumption is that $g_t(x_t)\ge 0$, and we prove that this remains true after each step.

In the case where the learned policy $\hat\pi$ is constraint-satisfying, $\|\hat u_t\|_2<\tfrac 1{LK}g_t(x_t)$, we apply this control, and the next state satisfies
\begin{align*}
g_{t+1}(x_{t+1})&\ge g_t(x_t)-LK\|u_t\|_2>0.
\end{align*}

The remaining case is where we switch to the recovery strategy, and we apply both $\uDFO$ and $\urec$ with
\begin{align*}
\|\uDFO\|_2&=\tfrac\epsilon{LK}g_t(x_t)\\
\|\urec\|_2&=\eta\le\tfrac{1-\epsilon}{LK}g_t(x_t)
\end{align*}
for some $0 < \epsilon \ll 1$ splitting the difference between $\eta$ and $\tfrac{g_t(x_t)}{LK}$. Then the state $x$ after applying these controls satisfies
\begin{align*}
g_t(x)&\ge g_t(x_t)-LK(\|\uDFO\|_2+\|\urec\|_2)\ge0.
\end{align*}

We have shown that always $g_t(x_t)\ge0$. If $\tilde B_t \cap \mathcal K ^C = \emptyset$, then by Lemma~\ref{insupport} the robot is always constraint-satisfying.
\end{proof}

The intuition behind the proof of this proposition is that if we choose DFR controls with appropriately small magnitudes, applying those controls will never lead to a step that exceeds the boundary of the estimated support.

\section{Experiments}

\begin{figure*}
\center
\includegraphics[width=1.0\textwidth]{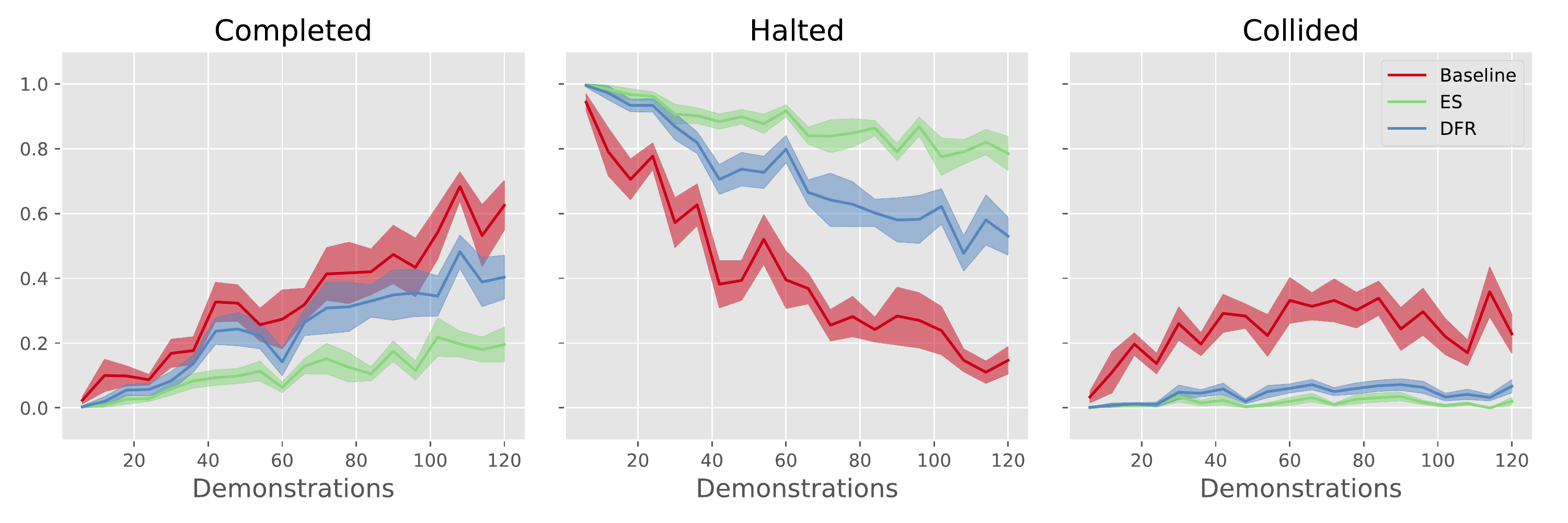}
\vspace*{-20pt}
\caption{
    \footnotesize
\textit{Left:} The fraction of completed completed samples of the three methods (Baseline, Early Stopping (ES), DFR) is plotted as a function of the number of demonstrations. DFR achieves a comparable completion rate to Baseline. \textit{Middle:} Halting rate which decreases for all methods as the learned policy acquires more data. Although Basline's halting rate decreases faster, it ultimately incurs more collisions without recovery. \textit{Right:} The collision rate for Baseline is much higher than either ES or DFR, which both have consistently low collision rates even with very little data.
}
\label{graph}
\vspace*{-15pt}

\end{figure*}

We conducted manipulation experiments in simulation and on a physical robot to evaluate the proposed detection method and the reliability of various recovery strategies. Our experiments aim to answer the following questions:
\begin{enumerate}
    \item Does support estimation provide a viable method for inferring safe regions given supervisor demonstrations when real constraint-violating regions exist but are not explicitly programmed by the supervisor? Is it viable even on systems where the conditions for constraint satisfaction do not necessarily hold?
    \item Does DFR effectively climb the decision function?
    \item How does DFR perform when varying the number of trajectories demonstrated?
    \item How does DFR perform in response to small disturbances not seen during training time?
\end{enumerate}

\subsection{Pusher Simulation}

\begin{figure}
\center
\includegraphics[width=0.45\textwidth]{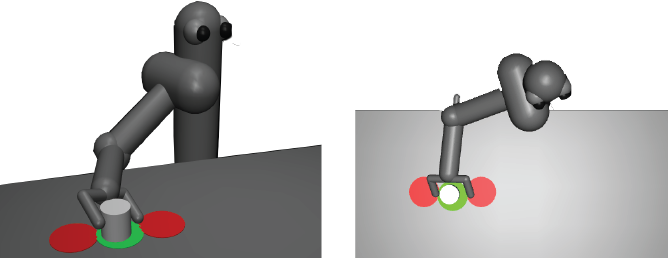}
\caption{
    \footnotesize \textit{Left:} The \textit{Pusher} task. The robot must learn to push the light gray object over the green circle without crossing over the red circles. \textit{Right:} Top-down view.
}
\vspace*{-20pt}
\label{fig:pusher}
\end{figure}

\textit{Pusher} (Fig. \ref{fig:pusher}) is an environment simulated in MuJoCo \cite{todorov2012mujoco} that considers the task of a one-armed robot pushing a light gray cylinder on a table to a green goal location. The initial state of the cylinder varies with each episode, preventing the robot from simply replaying a reference trajectory to succeed.

The robot has seven degrees of freedom controlling joint angle velocities. The state space consists of the joint angles, the joint angle velocities and the locations of the cylinder, end-effector, and goal object in 3D space. We modified the original task to allow control via direct changes in pose as opposed to velocity control of the joint angles. That is, the objects have no lasting momentum effects. We also introduced two regions marked in red representing the constraints of the task. The robot and the cylinder should not collide with these red regions. We stress that the robot does not know to avoid collisions with these states a priori, but the supervisor does. The robot must learn the support of the supervisor in order to recover if it approaches the collision states.

We generated an algorithmic supervisor using Trust Region Policy Optimization \cite{schulman2015trust} to collect large batches of supervisor demonstrations. The learning model used a neural network with two 64-node hidden layers and $\tanh$ activations. 120 supervisor trajectories were collected for each trial. The learning models were also represented with neural networks optimizing (\ref{emp}). The models cannot match the supervisor exactly, which introduces the need for the recovery policy.

For the OCSVM, we set $\nu = 0.05$ as an arbitrary quantile of the observed data and then tuned the kernel scale $\gamma = 5.0$ on out-of-sample trajectories from the supervisor. To simplify the support estimation, we removed joint angles from the state space to include only those features relevant to the recovery behavior, as we found extraneous features often caused the OCSVM to require much more data.

For this task, we define a ``Completed" trajectory to be any trajectory that reached the goal state without colliding. This includes trajectories where recovery was successful. A ``Collided" trajectory is any trajectory that reached a collision state. Finally, a trajectory that ``Halted" is any trajectory that neither reached the goal state nor entered a collision state in the allotted time. For example, the recovery policy may intentionally halt the task in high risk situations, resulting in a constraint-satisfying but incomplete trajectory. Trajectories that halted are strictly preferable to collisions. In many practical cases, they can also be reset, and the task may be attempted again. The ideal policy should minimize collisions while maintaining a high rate of completion.

We compared the proposed recovery strategy (DFR) in Algorithm \ref{alg1} to a Baseline, which did not employ any recovery behavior, and an early stopping (ES) policy, which simply halted when it came close to the estimated support boundary.  Fig. \ref{graph} illustrates the completed, halted, and collision rates for each method while varying the number of demonstrations of data. Across 10 trials with 60 evaluation samples per data-point per trial, DFR and ES significantly reduced the collision rate even with very little data compared to the Baseline, suggesting that staying within the estimated support is a viable method to avoid entering constraint violating regions. As more data was added, the completion rates of all three increased; however, DFR recovered from high risk situations allowing it to surpass ES and reach a comparable completion performance to the Baseline without significant collisions. DFR on average over all iterations achieved 83\% fewer collisions compared to the Baseline. Additionally the completion rate of DFR was only 65\% of that of the Baseline. Note that, due its conservative controller, DFR can prolong the wall clock time of a trajectory requiring an average of 1.50 seconds per trajectory while the baseline and ES required 0.13 seconds and 0.07 seconds, respectively. 

Fig. \ref{fig:dvrkconverge} depicts the effectiveness of the derivative-free optimization technique on the decision function when the recovery strategy is activated. Note that the recovery strategy remains activated until the value of the decision function reaches the cutoff value $\lambda \|\hat u_t\|_2$ or until 500 iterations have elapsed. On 50 instantiations of the optimization algorithm on \textit{Pusher}, each curve had nearly monotonic average improvement. We compared DFR with a finite difference oracle which was allowed to simulate controls before taking them in order to obtain numerical gradients with respect the controls.

\begin{figure}
\center
\includegraphics[width=0.4 \textwidth]{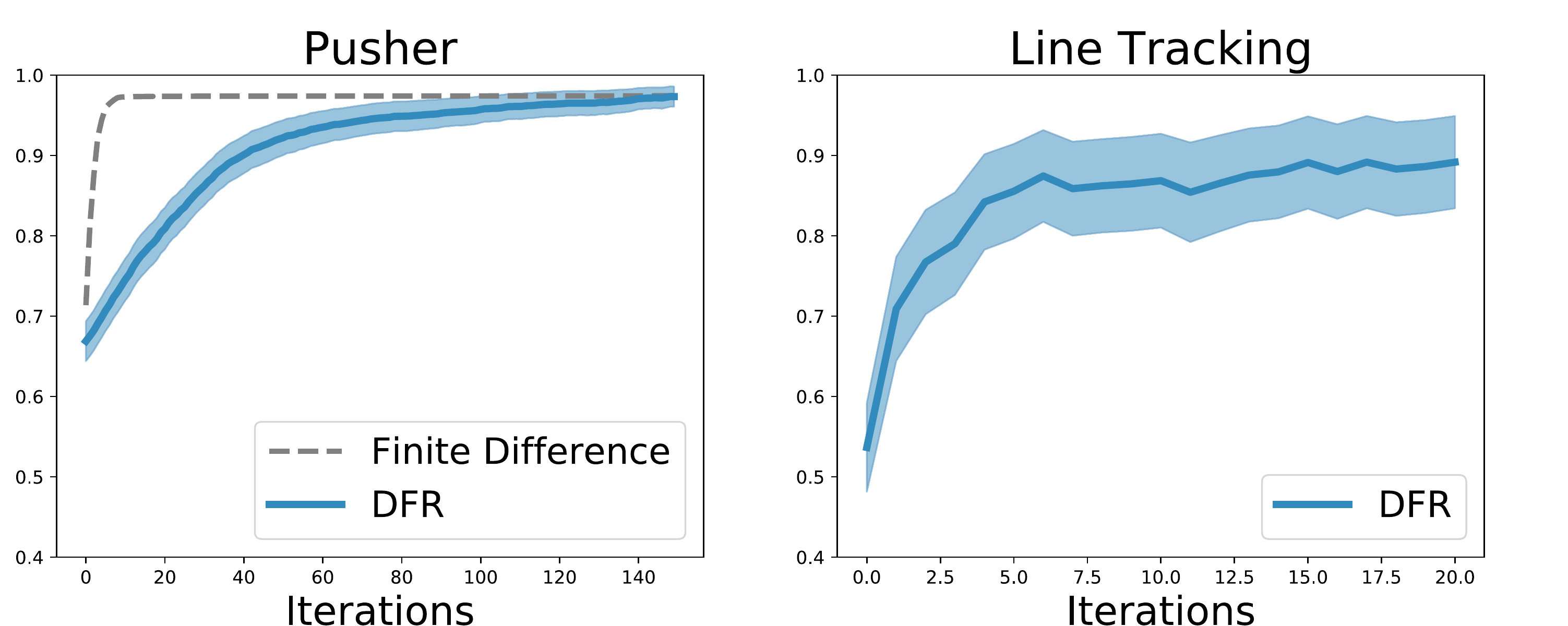}
\caption{
    \footnotesize
    \textit{Left:} The average of 50 DFR optimization curves on \textit{Pusher} is shown as a result of the recovery policy being activated during a trajectory. DFR is compared to a finite difference oracle. The decision function values were normalized between 0.0 and 1.0 where 1.0 represents the threshold of the switching policy. The normalized curves are capped at 1.0 because, by Alg. \ref{alg1}, the optimization stops once the threshold is reached. The few trajectories that do not reach 1.0 bring the average down slightly below 1.0 in both figures.
    \textit{Right:} The average of 30 DFR curves on the da Vinci.
    }
\vspace*{-20pt}
\label{fig:dvrkconverge}
\end{figure}

\subsection{Line Tracking on a da Vinci Surgical Robot}

Robotic surgical procedures consist of safety-critical tasks that require robust control due to disturbances in environment and dynamics that are difficult to model. We consider learning positional control in a task that mimics disturbances that might be encountered in such environments. We applied support estimation and recovery policies to the task of tracking lines on gauze using the Intuitive Surgical da Vinci robot as shown in Fig. \ref{teaser}. The objective of the task was to deploy a learned policy from demonstrations to follow a red line drawn in gauze using the end-effector under disturbances that were not shown during training time. The gauze was mounted on a Stewart platform \cite{patel2017sprk} which introduced random disturbances in the system during run time, but not during training. The robot used an overhead endoscope camera to observe images, which were processed to extract distances to the line and positions of the end-effector.

For this task, a ``Collided" trajectory was defined as any trajectory where the end-effector deviated by more than 4 mm from the red line. A ``Completed" trajectory was any trajectory that did not collide and tracked at least 40 mm of the gauze. All other trajectories were categorized as ``Halted."

Over 50 demonstrations were given with an open-loop controller without disturbances. Thus the trajectories never deviated from the line. As a result no notion of feedback control was present in the demonstration data. The robot's policy was represented by a neural network. As in \textit{Pusher}, we set the hyperparameters of the OCSVM by choosing a quantile level and validating on a held-out set of demonstrations.

The results are summarized in Fig. \ref{teaser}. The Baseline policy collided on the task repeatedly under random disturbances. The recovery was robust to the disturbances by attempting to keep the robot in the support. As in the \textit{Pusher} task, an increase in trajectories that halted was observed with DFR, indicating the ability to detect constraint-violating areas and halt in the worst case. An increase in the rate of completion was also observed as DFR applied controls to mitigate deviations from the line and resume the original policies when the state was sufficiently far from the boundary.


\section{Discussion and Future Work}
This paper presents Derivative-Free Recovery Control for robotic manipulation tasks. The results show that DFR can be used as an effective method of steering towards safe regions of a state space when a dynamics model is not known by ascending the decision function found by support estimation. Despite the promising asymptotic properties of the OCSVM, it can prove difficult in very high dimensional problems such as image space. This is a common trait of unsupervised learning the methods such as anomaly detection. Additionally the recovery procedure assumes the system comes to rest at each time step. In future work, we will extend DFR by addressing these problems with alternative support estimators and dimensionality reduction techniques and recovery planners that are less greedy.

\section{Acknowledgments}
This research was performed at the AUTOLAB at UC Berkeley in
affiliation with the Berkeley AI Research (BAIR) Lab, the Real-Time
Intelligent Secure Execution (RISE) Lab, and the CITRIS ``People and
Robots" (CPAR) Initiative and with UC Berkeley's Center for Automation and Learning for Medical Robotics (Cal-MR). The authors were supported in part by donations from Siemens, Google, Cisco, Autodesk, Amazon, Toyota Research, Samsung, and Knapp and by the Scalable Collaborative Human-Robot Learning (SCHooL) Project, NSF National Robotics Initiative Award 1734633, and by a major equipment grant from Intuitive Surgical. We thank our colleagues who provided thoughtful feedback and suggestions, in particular Bill DeRose, Sanjay Krishnan, Jeffrey Mahler, Matthew Matl, and Ajay Kumar Tanwani.

\bibliographystyle{IEEEtranS}
\bibliography{references}

\begin{thebibliography}{10}
\providecommand{\url}[1]{#1}
\csname url@samestyle\endcsname
\providecommand{\newblock}{\relax}
\providecommand{\bibinfo}[2]{#2}
\providecommand{\BIBentrySTDinterwordspacing}{\spaceskip=0pt\relax}
\providecommand{\BIBentryALTinterwordstretchfactor}{4}
\providecommand{\BIBentryALTinterwordspacing}{\spaceskip=\fontdimen2\font plus
\BIBentryALTinterwordstretchfactor\fontdimen3\font minus
  \fontdimen4\font\relax}
\providecommand{\BIBforeignlanguage}[2]{{%
\expandafter\ifx\csname l@#1\endcsname\relax
\typeout{** WARNING: IEEEtranS.bst: No hyphenation pattern has been}%
\typeout{** loaded for the language `#1'. Using the pattern for}%
\typeout{** the default language instead.}%
\else
\language=\csname l@#1\endcsname
\fi
#2}}
\providecommand{\BIBdecl}{\relax}
\BIBdecl

\bibitem{achiam2017constrained}
J.~Achiam, D.~Held, A.~Tamar, and P.~Abbeel, ``Constrained policy
  optimization,'' in \emph{International Conference on Machine Learning
  (ICML)}, 2017.

\bibitem{akametalu2014reachability}
A.~K. Akametalu, J.~F. Fisac, J.~H. Gillula, S.~Kaynama, M.~N. Zeilinger, and
  C.~J. Tomlin, ``Reachability-based safe learning with gaussian processes,''
  in \emph{IEEE Conference on Decision and Control (CDC)}, 2014.

\bibitem{armesto2017learning}
L.~Armesto, V.~Ivan, J.~Moura, A.~Sala, and S.~Vijayakumar, ``Learning
  constrained generalizable policies by demonstration,'' in \emph{Robotics:
  Science and Systems (RSS)}, 2017.

\bibitem{billard2008robot}
A.~Billard, S.~Calinon, R.~Dillmann, and S.~Schaal, ``Robot programming by
  demonstration,'' in \emph{Springer handbook of robotics}.\hskip 1em plus
  0.5em minus 0.4em\relax Springer Berlin Heidelberg, 2008, pp. 1371--1394.

\bibitem{calinon2009robot}
S.~Calinon, \emph{Robot programming by demonstration}.\hskip 1em plus 0.5em
  minus 0.4em\relax EPFL Press, 2009.

\bibitem{calinon2008probabilistic}
S.~Calinon and A.~Billard, ``A probabilistic programming by demonstration
  framework handling constraints in joint space and task space,'' in \emph{IEEE
  International Conference on Intelligent Robots and Systems (IROS)}, 2008.

\bibitem{chen2017algorithm}
C.~Chen, S.~Krishnan, M.~Laskey, R.~Fox, and K.~Goldberg, ``An algorithm and
  user study for teaching bilateral manipulation via iterated best response
  demonstrations,'' in \emph{International Conference on Automation Science and
  Engineering (CASE)}, 2017.

\bibitem{coates2008learning}
A.~Coates, P.~Abbeel, and A.~Y. Ng, ``Learning for control from multiple
  demonstrations,'' in \emph{International Conference on Machine Learning
  (ICML)}, 2008.

\bibitem{donald1989error}
B.~R. Donald, \emph{Error detection and recovery in robotics}.\hskip 1em plus
  0.5em minus 0.4em\relax Springer-Verlag New York, 1989.

\bibitem{donald2013planning}
B.~R. Donald, C.~G. Levey, I.~Paprotny, and D.~Rus, ``Planning and control for
  microassembly of structures composed of stress-engineered mems microrobots,''
  \emph{The International Journal of Robotics Research}, vol.~32, no.~2, pp.
  218--246, 2013.

\bibitem{fisac2017safety}
J.~F. Fisac, A.~K. Akametalu, M.~N. Zeilinger, S.~Kaynama, J.~H. Gillula, and
  C.~J. Tomlin, ``A general safety framework for learning-based control in
  uncertain robotic systems,'' \emph{arXiv preprint}, vol. abs/1705.01292,
  2017.

\bibitem{gayraud1997estimation}
G.~Gayraud, ``Estimation of functionals of density support,''
  \emph{Mathematical Methods of Statistics}, vol.~6, no.~1, pp. 26--46, 1997.

\bibitem{gillula2012guaranteed}
J.~H. Gillula and C.~J. Tomlin, ``Guaranteed safe online learning via
  reachability: tracking a ground target using a quadrotor,'' in \emph{IEEE
  International Conferece on Robotics and Automation (ICRA)}, 2012.

\bibitem{hausman2017multi}
K.~Hausman, Y.~Chebotar, S.~Schaal, G.~Sukhatme, and J.~Lim, ``Multi-modal
  imitation learning from unstructured demonstrations using generative
  adversarial nets,'' \emph{arXiv preprint}, vol. abs/1705.10479, 2017.

\bibitem{howard2009novel}
M.~Howard, S.~Klanke, M.~Gienger, C.~Goerick, and S.~Vijayakumar, ``A novel
  method for learning policies from variable constraint data,''
  \emph{Autonomous Robots}, vol.~27, no.~2, pp. 105--121, 2009.

\bibitem{laskeyrobot}
M.~Laskey, J.~Lee, C.~Chuck, D.~Gealy, W.~Hsieh, F.~T. Pokorny, A.~D. Dragan,
  and K.~Goldberg, ``Robot grasping in clutter: Using a hierarchy of
  supervisors for learning from demonstrations,'' \emph{Automation Science and
  Engineering (CASE), 2016 IEEE}, pp. 827--834, 2016.

\bibitem{laskey2017dart}
M.~Laskey, J.~Lee, R.~Fox, A.~Dragan, and K.~Goldberg, ``Dart: Noise injection
  for robust imitation learning,'' in \emph{Conference on Robot Learning},
  2017.

\bibitem{laskey2016shiv}
M.~Laskey, S.~Staszak, W.~Y.-S. Hsieh, J.~Mahler, F.~T. Pokorny, A.~D. Dragan,
  and K.~Goldberg, ``Shiv: Reducing supervisor burden in dagger using support
  vectors for efficient learning from demonstrations in high dimensional state
  spaces,'' in \emph{Robotics and Automation (ICRA), 2016 IEEE International
  Conference on}.\hskip 1em plus 0.5em minus 0.4em\relax IEEE, 2016, pp.
  462--469.

\bibitem{liang2017dvrk}
J.~Liang, J.~Mahler, M.~Laskey, P.~Li, and K.~Goldberg, ``Using dvrk
  teleoperation to facilitate deep learning of automation tasks for an
  industrial robot,'' in \emph{IEEE International Conference on Automation
  Science and Engineering (CASE)}, 2017.

\bibitem{lu2015human}
L.~Lu and J.~T. Wen, ``Human-directed robot motion/force control for contact
  tasks in unstructured environments,'' in \emph{International Conference on
  Automation Science and Engineering (CASE)}, 2015.

\bibitem{patel2017sprk}
V.~Patel, S.~Krishnan, A.~Goncalves, and K.~Goldberg, ``Sprk: A low-cost
  stewart platform for motion study in surgical robotics,'' in
  \emph{International Symposium on Medical Robotics (ISMR)}, 2018.

\bibitem{perez2017c}
C.~P{\'e}rez-D'Arpino and J.~A. Shah, ``C-learn: Learning geometric constraints
  from demonstrations for multi-step manipulation in shared autonomy,'' in
  \emph{IEEE International Conference on Robotics and Automation (ICRA)}, 2017.

\bibitem{rios2013derivative}
L.~M. Rios and N.~V. Sahinidis, ``Derivative-free optimization: a review of
  algorithms and comparison of software implementations,'' \emph{Journal of
  Global Optimization}, vol.~56, no.~3, pp. 1247--1293, 2013.

\bibitem{ross2010efficient}
S.~Ross and D.~Bagnell, ``Efficient reductions for imitation learning,'' in
  \emph{International Conference on Artificial Intelligence and Statistics},
  2010, pp. 661--668.

\bibitem{rossano2013easy}
G.~F. Rossano, C.~Martinez, M.~Hedelind, S.~Murphy, and T.~A. Fuhlbrigge,
  ``Easy robot programming concepts: An industrial perspective,'' in
  \emph{International Conference on Automation Science and Engineering (CASE)},
  2013.

\bibitem{scholkopf2001estimating}
B.~Sch{\"o}lkopf, J.~C. Platt, J.~Shawe-Taylor, A.~J. Smola, and R.~C.
  Williamson, ``Estimating the support of a high-dimensional distribution,''
  \emph{Neural computation}, vol.~13, no.~7, pp. 1443--1471, 2001.

\bibitem{scholkopf2002learning}
B.~Sch{\"o}lkopf and A.~J. Smola, \emph{Learning with kernels: Support vector
  machines, regularization, optimization, and beyond}.\hskip 1em plus 0.5em
  minus 0.4em\relax MIT press, 2002.

\bibitem{schroecker2017state}
Y.~Schroecker and C.~L. Isbell, ``State aware imitation learning,'' in
  \emph{Advances in Neural Information Processing Systems}, 2017, pp.
  2915--2924.

\bibitem{schulman2015trust}
J.~Schulman, S.~Levine, P.~Abbeel, M.~Jordan, and P.~Moritz, ``Trust region
  policy optimization,'' in \emph{International Conference on Machine Learning
  (ICML)}, 2015.

\bibitem{singh2017gplac}
A.~Singh, L.~Yang, and S.~Levine, ``Gplac: Generalizing vision-based robotic
  skills using weakly labeled images,'' \emph{arXiv preprint}, vol.
  abs/1708.02313, 2017.

\bibitem{thananjeyan2017multilateral}
B.~Thananjeyan, A.~Garg, S.~Krishnan, C.~Chen, L.~Miller, and K.~Goldberg,
  ``Multilateral surgical pattern cutting in 2d orthotropic gauze with deep
  reinforcement learning policies for tensioning,'' in \emph{IEEE International
  Conference on Robotics and Automation (ICRA)}, 2017.

\bibitem{todorov2012mujoco}
E.~Todorov, T.~Erez, and Y.~Tassa, ``Mujoco: A physics engine for model-based
  control,'' in \emph{International Conference on Intelligent Robots and
  Systems (IROS)}, 2012.

\bibitem{van2010superhuman}
J.~Van Den~Berg, S.~Miller, D.~Duckworth, H.~Hu, A.~Wan, X.-Y. Fu, K.~Goldberg,
  and P.~Abbeel, ``Superhuman performance of surgical tasks by robots using
  iterative learning from human-guided demonstrations,'' in \emph{ICRA, 2010
  IEEE}.\hskip 1em plus 0.5em minus 0.4em\relax IEEE, 2010, pp. 2074--2081.

\bibitem{vert2006consistency}
R.~Vert and J.-P. Vert, ``Consistency and convergence rates of one-class svms
  and related algorithms,'' \emph{The Journal of Machine Learning Research},
  vol.~7, pp. 817--854, 2006.

\bibitem{yip2016model}
M.~C. Yip and D.~B. Camarillo, ``Model-less hybrid position/force control: a
  minimalist approach for continuum manipulators in unknown, constrained
  environments,'' \emph{IEEE Robotics and Automation Letters}, vol.~1, no.~2,
  pp. 844--851, 2016.

\end{thebibliography}

\end{document}